\documentclass{article}
\usepackage{spconf}
\usepackage{amsmath}
\usepackage{graphicx}
\usepackage{amssymb}
\usepackage{amsthm}
\usepackage{cite}
\usepackage{algorithm}
\usepackage{algpseudocode}
\usepackage{url}
\usepackage{booktabs}
\usepackage{multirow}
\newcommand{\E}{\mathbb{E}}

\newcommand{\Prob}{\mathbb{P}}
\newcommand{\R}{\mathbb{R}}
\newtheorem{theorem}{Theorem}
\newtheorem{lemma}[theorem]{Lemma}

\newtheorem{definition}[theorem]{Definition}
\newcommand{\norm}[1]{\left\lVert #1 \right\rVert}
\newcommand{\abs}[1]{\left| #1 \right|}

\newcommand{\Bern}{\text{Bernoulli}}

\usepackage{mathtools}
\usepackage{bm}

\theoremstyle{remark}
\newtheorem{remark}{Remark}

\newcommand{\I}{\mathbf{I}}
\newcommand{\e}{\mathbf{e}}

\usepackage{hyperref}
\title{Misclassification Rate and Privacy-Utility Trade-offs in Graph Convolutional Networks via Subsampling Stability}

\name{Yexin Zhang$^{1,2}$, Zhongtian Ma$^{1,2}$, Qiaosheng Zhang$^{*2,3}$, Zhen Wang$^{*1}$\thanks{This research was supported by the National Natural Science Foundation of China (No.U22B2036), the Science and Technology Commission of Shanghai Municipality (STCSM) under Grant No.25GA3200200, the Technological Innovation Team of Shaanxi Province (No. 2025RS-CXTD-009), the International Cooperation Project of Shaanxi Province (No. 2025GH-YBXM-017), and the Shanghai Artificial Intelligence Laboratory. *Corresponding author: zhangqiaosheng@pjlab.org.cn, w-zhen@nwpu.edu.cn.}\thanks{This arXiv version corrects the qualitative interpretation of the subsampling probability $p_s$ in the privacy--utility discussion of the ICASSP 2026 version, without changing the definition of $p_s$, Algorithm~1, or the theoretical results.}}

\address{
    $^{1}$School of Cybersecurity, Northwestern Polytechnical University, Xi'an, China \\
    $^{2}$Shanghai Artificial Intelligence Laboratory, Shanghai, China \\
    $^{3}$Shanghai Innovation Institute, Shanghai, China
}

\begin{document}
%
\maketitle
\begin{abstract}

We study differential privacy (DP) in Graph Convolutional Networks (GCNs) through the framework of \textit{subsampling stability}. We derive upper bounds on the misclassification rate that depend explicitly on the subsampling probability $p_s$. Furthermore, we characterize the \textit{privacy--utility trade-off} by identifying feasible ranges of $p_s$; if $p_s$ is too large, the stability-based privacy condition becomes difficult to satisfy, yielding vacuous guarantees, whereas if it is too small, accuracy deteriorates. Our results provide the first rigorous theoretical framework for understanding subsampling stability in GCNs under DP.

\end{abstract}
\begin{keywords}
Graph Convolutional Networks, Differential Privacy, Subsampling Stability, Misclassification rate.
\end{keywords}

\section{Introduction}

Graph signal processing has become instrumental in many domains, such as network science, recommendation systems, and cybersecurity~\cite{scarselli2008graph,fan2019graph}. Graph Neural Networks (GNNs), and in particular Graph Convolutional Networks (GCNs), have emerged as powerful tools for learning from such data, leveraging iterative aggregation of neighborhood information to construct hierarchical node representations~\cite{kipf2016semi}. By combining graph filters with pointwise nonlinearities, GCNs effectively capture multi-hop dependencies through localized spectral convolutions, achieving strong performance in tasks such as node classification~\cite{tang2022graph} and community detection.

However, the growing use of GCNs raises significant privacy concerns, as graph data often contains sensitive interactions between entities~\cite{karwa2011private, he2021stealing,olatunji2021membership}. Differential privacy (DP) provides a rigorous framework that ensures an algorithm’s output remains nearly unchanged when any single data point is modified~\cite{dwork2014algorithmic}.
DP has become a popular research direction for privacy protection in graph data, particularly in the context of GCNs~\cite{sajadmanesh2024progap}. Existing studies span several directions, including perturbing node features or edges to protect sensitive inputs~\cite{daigavane2021node}, applying DP-SGD style training to safeguard gradients during optimization, and developing methods for private graph data publishing and querying~\cite{abadi2016deep, ayle2023training,sajadmanesh2023gap}.

Despite these advances, theoretical investigations of DP in the context of GCNs remain scarce. A rigorous theoretical foundation, however, is essential for understanding the fundamental limits of privacy preservation and guiding principled algorithm design~\cite{mohamed2022differentially}. Such theory has been extensively studied in related settings, including spectral clustering \cite{koskela2025price} and semi-definite programming (SDP) \cite{seif2024differentially}. In this work, we aim to provide such a perspective by analyzing a well-trained GCN under the incorporation of DP guarantees through a \emph{subsampling stability mechanism}~\cite{thakurta2013differentially,testa2024stability}. Our focus is to rigorously characterize the resulting \emph{privacy–utility trade-offs} and to establish misclassification bounds that explicitly depend on the subsampling parameter.

Our contributions are three-fold: 
\begin{enumerate}
\item \textbf{Subsampling stability for DP in GCNs.} To the best of our knowledge, we are the first to introduce and analyze a subsampling-stability mechanism to provide differential privacy guarantees for GCNs; 
\item \textbf{Misclassification rate under subsampling.} We derive an explicit upper bound on the misclassification rate under this mechanism, where the bound depends directly on the subsampling probability;
\item \textbf{Characterization of privacy--utility trade-offs.} We establish precise analytical expressions that quantify the resulting privacy--utility trade-off by deriving feasible ranges of the subsampling probability $p_s$, showing in particular that $p_s$ cannot be too small or too large to simultaneously guarantee privacy and utility.
\end{enumerate}

\section{Preliminaries and problem setup}
\label{sec:pagestyle}
\subsection{Graph Convolutional Networks}

Given an input graph $G = (V, E)$ with $n = |V|$ nodes, the adjacency matrix of the graph is denoted by $\mathbf{A} \in \{0,1\}^{n \times n}$. We denote the degree matrix \(\mathbf{D}=\mathrm{diag}(d_1,\ldots,d_n)\)  with \(d_i=\sum_{j=1}^n A_{ij}\), and the (combinatorial) graph Laplacian as \(\mathbf{L}=\mathbf{D}-\mathbf{A}\). When only first-order neighborhood information is retained (i.e., $k = 1$), a single-layer GCN can be expressed as,
\begin{equation}
\mathbf{y} = \sigma \left( h_0 \mathbf{I} + h_1 \mathbf{L} \right) \mathbf{x},
\label{eq.gcn}
\end{equation}
where $\mathbf{x} \in \mathbb{R}^n$ is the input node feature vector, $\mathbf{I}$ is the identity matrix, $h_0$, $h_1 \in \mathbb{R}$ are filter coefficients, and $\sigma(\cdot)$ is the nonlinear activation function. 

\subsection{Differential privacy}
To protect structural privacy in graph-valued data, we adopt the framework of DP under edge-level perturbations. The core idea, rooted in the DP principle of indistinguishability between neighboring datasets, is to ensure that the distribution of the estimator’s output remains stable under small structural modifications of the input graph.

In the context of graph learning, we use the notion of \emph{neighboring graphs}\cite{karwa2011private}, which serves as the adjacency relation for DP in the graphs.

\begin{definition}[Neighboring Graphs]\label{def:neighboring_graphs}
Two graphs $G = (V, E)$ and $G' = (V', E')$ are said to be \emph{neighboring}, denoted by $G \sim G'$, if they share the same vertex set ($V = V'$) and their edge sets differ in exactly one edge. That is,
$
|E \mathbin{\triangle} E'| = 1,
$
where $\triangle$ denotes the symmetric difference.
\end{definition}

Using this adjacency relation, we define $(\epsilon, \delta)$-edge differential privacy following \cite{dwork2014algorithmic} (see also \cite{mohamed2022differentially}), which bounds the distinguishability between the output distributions on neighboring graphs.

\begin{definition}[($\epsilon, \delta$)-Edge Differential Privacy]
Let $\epsilon > 0$ and $\delta \in (0,1)$. A randomized estimator $\hat{\mathbf{y}}$ satisfies \emph{$(\epsilon, \delta)$-edge DP} if for every pair of neighboring graphs $G \sim G'$ and every measurable set of outputs $\Phi \subseteq \mathrm{Range}(\hat{\mathbf{y}})$,
\begin{align}\label{def:edge_dp}
    \Pr\big( \hat{\mathbf{y}}(G) \in \Phi \big) \leq e^{\epsilon} \cdot \Pr\big( \hat{\mathbf{y}}(G') \in \Phi \big) + \delta,
\end{align}
where the probability is taken over the internal randomness of $\hat{\mathbf{y}}$. When $\delta = 0$, the mechanism satisfies \emph{pure $\epsilon$-edge DP}; otherwise, it satisfies \emph{approximate $(\epsilon, \delta)$-edge DP}.
\end{definition}


\subsection{Node classification}
Node classification is among the most common tasks for GCNs, with the goal of inferring unknown node labels from the observed graph topology and associated node features~\cite{luan2023graph}. Concretely, given a graph represented by an adjacency matrix $\mathbf{A}$, ground-truth labels $\mathbf{y}  \in \{\pm 1\}^n$ and a node feature vector $\mathbf{x}$, an algorithm $\hat{\mathbf{y}}(\mathbf{A}, \mathbf{x})$ produces a predicted label vector $\hat{\mathbf{y}}$. We measure the prediction error by using \textit{misclassification rate} with the Hamming distance, defined as
$R(\mathbf{y}, \hat{\mathbf{y}}):= \frac{\mathrm {Ham}(\mathbf{y}, \hat{\mathbf{y}})}{n}$ and 
$\mathrm {Ham}(\mathbf{y}, \hat{\mathbf{y}}) = \left| \left\{ i \in [n] : y_i \neq \hat{y}_i \right\} \right|$.




\section{Subsampling stability Mechanism}
Given an input graph $G = (V, E)$ with $n = |V|$ nodes, we generate $m$ independent subgraphs $\{G_1, \dots, G_m\}$ via edge-level subsampling, where each edge is retained independently with probability $p_s \in (0,1]$. For each subsampled graph $G_\ell$, a base GCN classifier produces a binary node labeling $\hat{\mathbf{y}}(G_\ell) \in \{\pm1\}^n$. The stability of this classifier under subsampling is critical to the overall mechanism. We aggregate the $m$ label vectors $\{\hat{y}(G_l)\}_{l=1}^m$ via majority voting, denoted by $g(\cdot)$, to produce a consensus estimate $\bar{\mathbf{y}}$. To evaluate whether this estimate is sufficiently stable for release under differential privacy, we compute a \textit{stability score} of the results of majority voting: 
$
 \hat{d} = \frac{c_1 - c_2}{4m\cdot p_s} - 1,
$
where $c_1$ and $c_2$ represent the frequencies of the first and second most common label vectors among the $m$ outputs.

To achieve $(\epsilon, \delta)$-edge DP, we adopt the \emph{Propose-Test-Release} (PTR) framework~\cite{dwork2014algorithmic}. We perturb $\hat{d}$ with Laplace noise of scale $1/\epsilon$, yielding $\tilde{d} = \hat{d} + \mathrm{Lap}(0, 1/\epsilon)$. If $\tilde{d} > \log(1/\delta)/\epsilon$, we get the final output $\bar{\mathbf{y}}$; otherwise, the algorithm returns $\perp$, indicating insufficient stability. The parameters $p_s$ and $m$ are chosen based on the derived upper bound of the misclassification rate and the requirements of the DP mechanism, aiming to achieve an optimal trade-off between privacy and utility.
\begin{algorithm}[ht]
    \caption{$A_{\text{samp}}$GCN}
    \label{alg:subsample_stability}
    \begin{algorithmic}[1]
        \State \textbf{Input:} Graph $G = (V, E)$ with $ \mathbf{x}$, privacy budget $(\epsilon, \delta)$, graph structural parameters.
        \State \textbf{Output:} Private labeling vector $\hat{\mathbf{y}}_{\mathrm{final}}$ or $\perp$.
        
        \State Select $p_s \in ( p_s^* , \, \frac{\epsilon}{32 \ln( 1/\sigma )  } ) $ and $m = \frac{\log(n/\delta)}{p_s^2}$
        \State Generate $m$ subsampled graphs $\{G_1, \dots, G_m\}$ by retaining each edge independently with probability $p_s$
        \State $\hat{\mathbf{y}}(G_\ell) \gets \text{GCN}(G_\ell)$ for each $\ell = 1,\dots,m$
        \State Aggregate: $\bar{\mathbf{y}} \gets g\big(\hat{\mathbf{y}}(G_1), \dots, \hat{\mathbf{y}}(G_m)\big)$
        \State Compute $\hat{d} \gets \frac{c_1 - c_2}{4m p_s} - 1$
        \State $\tilde{d} \gets \hat{d} + \mathrm{Lap}(0, 1/\epsilon)$
        \If{$\tilde{d} > \log(1/\delta)/\epsilon$}
            \State \textbf{return} $\bar{\mathbf{y}}$
        \Else
            \State \textbf{return} $\perp$ 
        \EndIf
    \end{algorithmic}
\end{algorithm}

We characterize the \emph{subsampling stability} of GCNs by deriving an upper bound on the misclassification rate. To obtain this bound, we first analyze the misclassification rate on a single subsampled graph, and then aggregate the results through majority voting to establish the misclassification rate under the full framework.



\section{Theoretical Results}
We first derive upper bounds on the misclassification rate for both a single subsampled graph and the full $A_{\text{samp}}$GCN (Theorem~\ref{thm:ps_subsampling_stability} and Theorem~\ref{lemma:overlap-rate}). We then provide a quantitative characterization of the privacy–utility trade-off exhibited by $A_{\text{samp}}$GCN (Theorem~\ref{thm:asamp-gcn}).

\begin{theorem}[Misclassification Rate of GCN on a Single Subsampled Graph]\label{thm:ps_subsampling_stability}
Let $\widehat{G}\sim\mathcal D_s(G)$ be obtained by independently retaining each edge with probability $p_s\in(0,1]$.
Consider the one-layer GCN in~\eqref{eq.gcn} with an elementwise activation function $\sigma(\cdot)$ that is $C_\sigma$-Lipschitz in $\ell_2$ norm.
Let $f(G):=\sigma\!\big((h_0\mathbf I+h_1\mathbf L)\mathbf x\big)$, and define
$\mathbf y:=\mathbf y(G)\in\{\pm1\}^n$ and $\hat{\mathbf y}:=\mathbf y(\widehat G)\in\{\pm1\}^n$
by thresholding $f(\cdot)$ at level $\tau$. And define the minimum margin $\gamma_{\min}:=\min_{i\in[n]}|f_i(G)-\tau|$, assuming $\gamma_{\min}>0$.
Assume that $\|\mathbf x\|_2=1$ ($L_2$-normalization). Then, for every $\eta\in(0,1)$, the following inequality holds with probability at least $1-\eta$:
\begin{align}
& R(\mathbf{y}, \hat{\mathbf{y}})
 \le f(p_s, \eta), \\
& \text{where} \, f(p_s, \eta) := \frac{C_\sigma\,|h_1|}{\sqrt{n} \,\,\,\gamma_{\min}}\,
\Big\{(1-p_s)\,\|\mathbf{L}\|_2\notag \\
&\qquad\qquad\qquad\qquad+\sqrt{\,4\,p_s(1-p_s)\,\|\mathbf{L}\|_2\,\log(2n/\eta)}\notag\\
&\qquad\qquad\qquad\qquad+\tfrac{4}{3}\log(2n/\eta)\Big\}.
\label{eq:bound}
\end{align}
\end{theorem}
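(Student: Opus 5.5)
The plan has three steps: reduce misclassification to output perturbation via the margin, bound that perturbation by the Laplacian deviation through the Lipschitz activation, and control the Laplacian deviation with a bias term plus matrix Bernstein.

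\emph{Step 1: from labels to outputs.} Write $\widehat{\mathbf L}$ for the Laplacian of $\widehat G$. If $y_i\neq\hat y_i$, then $f_i(G)$ and $f_i(\widehat G)$ lie on opposite sides of $\tau$, so $|f_i(G)-f_i(\widehat G)|\ge\gamma_{\min}$. Summing over misclassified nodes and using Cauchy--Schwarz,
\[
\gamma_{\min}\,\mathrm{Ham}(\mathbf y,\hat{\mathbf y})\le \|f(G)-f(\widehat G)\|_1\le \sqrt{n}\,\|f(G)-f(\widehat G)\|_2 .
\]
Dividing by $n$ gives
\[
R\le \frac{\|f(G)-f(\widehat G)\|_2}{\sqrt n\,\gamma_{\min}},
\]
which produces the prefactor $1/(\sqrt n\gamma_{\min})$. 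Because $\sigma$ is $C_\sigma$-Lipschitz and $\|\mathbf x\|_2=1$, we also have
\[
\|f(G)-f(\widehat G)\|_2\le C_\sigma|h_1|\,\|(\mathbf L-\widehat{\mathbf L})\mathbf x\|_2\le C_\sigma|h_1|\,\|\mathbf L-\widehat{\mathbf L}\|_2 .
\]

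\emph{Step 2: split the deviation.} Use $\|\mathbf L-\widehat{\mathbf L}\|_2\le \|\mathbf L-\E\widehat{\mathbf L}\|_2+\|\widehat{\mathbf L}-\E\widehat{\mathbf L}\|_2$. Since each edge is kept with probability $p_s$, $\E\widehat{\mathbf L}=p_s\mathbf L$. The bias term is therefore exactly $(1-p_s)\|\mathbf L\|_2$.

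\emph{Step 3: the fluctuation via matrix Bernstein.} Write
\[
\widehat{\mathbf L}-p_s\mathbf L=\sum_{(u,v)\in E}(\xi_{uv}-p_s)\,\mathbf L_{uv},\qquad \mathbf L_{uv}=(\e_u-\e_v)(\e_u-\e_v)^\top,\quad \xi_{uv}\sim\Bern(p_s)\ \text{i.i.d.}
\]
Each summand is a centered symmetric matrix. Its norm is at most $\|\mathbf L_{uv}\|_2=2$, and by the claimed form I would aim for the bound $R_0=1$. The variance proxy is
\[
\Big\|\sum p_s(1-p_s)\mathbf L_{uv}^2\Big\|_2=2p_s(1-p_s)\|\mathbf L\|_2,
\]
using $\mathbf L_{uv}^2=2\mathbf L_{uv}$. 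Matrix Bernstein in dimension $n$ with failure probability $\eta$ then gives
\[
\|\widehat{\mathbf L}-p_s\mathbf L\|_2\le\sqrt{2v\log(2n/\eta)}+\tfrac{2R_0}{3}\log(2n/\eta)=\sqrt{4p_s(1-p_s)\|\mathbf L\|_2\log(2n/\eta)}+\tfrac{2R_0}{3}\log(2n/\eta).
\]
Combining Steps 1--3 yields $f(p_s,\eta)$.

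\emph{Main obstacle.} The constant $\tfrac43$ in the last term needs care. The naive norm bound $\|\mathbf L_{uv}\|_2\le 2$ with the standard form $\tfrac{R_0}{3}\log$ per side, doubled for the two-sided bound, gives $\tfrac43$. I will therefore match constants by choosing the Bernstein tail form $2n\exp\!\big(-\tfrac{t^2/2}{v+R_0t/3}\big)$ and solving for $t$ via $t\le\sqrt{2v\ell}+\tfrac{2R_0}{3}\ell$, where $\ell=\log(2n/\eta)$. With $R_0=2$ this gives exactly $\tfrac43\ell$. The remaining work is routine bookkeeping.
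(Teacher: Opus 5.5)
Your proposal is correct and follows the paper's proof essentially step for step. The steps match: the margin-based flip count with the $\ell_1$--$\ell_2$ inequality, the Lipschitz reduction to $\|\widehat{\mathbf L}-\mathbf L\|_2$, the split into the bias $(1-p_s)\|\mathbf L\|_2$ plus centered rank-one edge terms, and matrix Bernstein with $\sigma^2=2p_s(1-p_s)\|\mathbf L\|_2$ and $R=2$. Two remarks should be dropped, since your final computation is right without them: the suggestion $R_0=1$ is invalid in general, because the summand norm $2|\xi_{uv}-p_s|$ equals $2(1-p_s)>1$ when an edge is kept and $p_s<\tfrac{1}{2}$. The ``per side, doubled'' heuristic is also wrong; as in the paper, the $\tfrac{4}{3}$ comes from bounding the root of $t^2=\ell\,(2\sigma^2+\tfrac{2}{3}Rt)$ by $\sqrt{2\sigma^2\ell}+\tfrac{2}{3}R\ell$ with $R=2$.
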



\begin{theorem}[Misclassification Rate for Subsampling Stability Mechanism]\label{lemma:overlap-rate}
Consider the subsampling stability-based mechanism with  $(\epsilon, \delta)$-edge DP. Let $\mathbf{y}, \bar{\mathbf{y}}$ denote the original results and the estimated labels obtained via mechanism over $m$ subgraphs $\{G_i\}_{i=1}^{m}$. Assume that $\| \mathbf{x} \|_2=1$, then the misclassification rate satisfies:
\[
R(\mathbf{y}, \bar{\mathbf{y}}) \le  \frac{f(p_s,\eta)}{4} + O\left(\frac{1}{\sqrt{m}}\right),
\]
with probability at least $1 - \eta$, where $f(p_s,\eta)$ is the misclassification contribution from the GCN for a single subsampled graph $G_\ell$.
\end{theorem}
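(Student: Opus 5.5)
We plan to reduce the mechanism-level bound to the single-graph bound of Theorem~\ref{thm:ps_subsampling_stability}, applied to each subsampled graph $G_\ell$, and then control the majority vote $\bar{\mathbf y}=g(\hat{\mathbf y}(G_1),\dots,\hat{\mathbf y}(G_m))$ coordinatewise. We read the aggregation as a node-wise majority vote, $\bar y_i=\mathrm{sign}\big(\sum_{\ell}\hat y_i(G_\ell)\big)$. If the PTR step releases $\bar{\mathbf y}$, the released vector is exactly this vote, so the privacy layer does not affect the error bound; the event $\perp$ is excluded, since the statement concerns the released labels. For each node $i$ we set $Z_{i\ell}:=\mathbf 1\{\hat y_i(G_\ell)\neq y_i\}$. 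The $Z_{i\ell}$ are i.i.d.\ in $\ell$ because the subgraphs are drawn independently. Node $i$ is misclassified by $\bar{\mathbf y}$ only if $\frac1m\sum_\ell Z_{i\ell}\ge \tfrac12$.

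Next we bound the average error. By Theorem~\ref{thm:ps_subsampling_stability}, on an event of probability at least $1-\eta$ we have $\frac1n\sum_i Z_{i\ell}\le f(p_s,\eta)$. Since $Z_{i\ell}\le 1$, taking expectations gives $\bar q:=\frac1n\sum_i q_i\le f(p_s,\eta)+\eta$, where $q_i:=\Pr(Z_{i\ell}=1)$. We then use Markov's inequality node by node. Let $S_i:=\frac1m\sum_\ell Z_{i\ell}$. Then
\[
R(\mathbf y,\bar{\mathbf y})=\frac1n\sum_i \mathbf 1\{S_i\ge 1/2\}\le \frac1n\sum_i 2S_i = \frac{2}{m}\sum_\ell R(\mathbf y,\hat{\mathbf y}(G_\ell)).
\]
This only yields the constant $2$, not $1/4$. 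To reach $1/4$, we would instead use the variance form $\mathbf 1\{S\ge1/2\}\le 4S^2$ together with the concentration $S_i=q_i+O(1/\sqrt m)$. We would separate the nodes with $q_i$ bounded away from $1/2$ from the others. Hoeffding's inequality with $m=\log(n/\delta)/p_s^2$ gives $\Pr(S_i\ge1/2)\le e^{-2m(1/2-q_i)^2}$, which makes the contribution of the first class negligible. The remaining nodes are counted via $\mathbf 1\{q_i\ge 1/2-t\}$, and the fluctuation terms collect into the $O(1/\sqrt m)$ remainder. A final union bound, or a Hoeffding bound applied to the averaged per-graph rates $\frac1m\sum_\ell R(\mathbf y,\hat{\mathbf y}(G_\ell))$, which are bounded in $[0,1]$, converts the expected bounds into a bound holding with probability $1-\eta$. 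The $\eta$ can be absorbed by rescaling.

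The main obstacle is the constant $1/4$. Plain Markov gives a factor $2$ in front of $f$, and there is no generic reason for a majority vote to improve that constant to $1/4$. We would try to obtain it by reinterpreting $f/4$ as follows. The single-graph proof bounds the number of errors through $\|f(\widehat G)-f(G)\|_2^2/\gamma_{\min}^2$, as a Chebyshev-type count. For the averaged output $\frac1m\sum_\ell f(G_\ell)$, the deterministic bias term $(1-p_s)\|\mathbf L\|_2$ persists, while the random terms average out at rate $1/\sqrt m$. Thresholding the averaged output then requires a deviation of at least $\gamma_{\min}$, and relating this to the vote introduces factors of $1/2$ that square to $1/4$. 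If this cannot be made rigorous, we would settle for the stated form with constant $c\,f(p_s,\eta)$ and note the discrepancy.
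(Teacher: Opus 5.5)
Your proposal has a genuine gap, and you locate it yourself: nothing in it produces the factor $1/4$. The only rigorous chain you give is $\mathbf 1\{S_i\ge 1/2\}\le 2S_i$. Combined with $\E\,R(\mathbf y,\hat{\mathbf y}(G_\ell))\le f(p_s,\eta')+\eta'$ and Hoeffding over the i.i.d.\ per-graph rates, it yields $R(\mathbf y,\bar{\mathbf y})\le 2f(p_s,\eta')+2\eta'+O\big(\sqrt{\log(1/\eta)/m}\big)$, i.e.\ constant $2$. Splitting nodes by whether $q_i$ is near $1/2$ does not help. Hoeffding removes the nodes with $q_i<1/2-t$, but the rest are only counted by $\frac1n\#\{i:q_i\ge 1/2-t\}\le \bar q/(1/2-t)\approx 2\bar q$, which is Markov again. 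The variance route $\mathbf 1\{S_i\ge 1/2\}\le 4S_i^2$ gives $\frac4n\sum_i q_i^2+O(1/m)$. This is at most $\bar q/4$ when $\max_i q_i\le 1/16$, but it can be as large as $4\bar q$ when the errors sit on fixed nodes, and Theorem~\ref{thm:ps_subsampling_stability} controls only the average $\bar q$. The closing heuristic (``factors of $1/2$ that square to $1/4$'') is not an argument. Your own remark that the bias term $(1-p_s)\|\mathbf L\|_2$ survives averaging is exactly why it cannot work: flips caused by the deterministic shrinkage $\E\widehat{\mathbf L}=p_s\mathbf L$ are reproduced by most subsamples, and hence by the vote. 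The paper does not supply the missing step either. Its proof is only the remark that the result follows from the single-graph bound, majority voting and Hoeffding's inequality, with a pointer to Lemma III.6 of \cite{koskela2025price}, i.e.\ the same ingredients you use. Note also that since $c_1,c_2$ count whole label vectors, the paper's $g$ is the plurality vote over label vectors (the mode), not a node-wise vote. Then $\bar{\mathbf y}$ is one of the $\hat{\mathbf y}(G_\ell)$, and the natural constant is $1$, provided $c_1$ exceeds the number of subsamples on which the single-graph bound fails.

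In fact the gap cannot be closed without extra assumptions, because the bound with $1/4$ is false as stated. Let $G$ be $d$-regular bipartite with sides $A,B$ of size $n/2$, and take $\mathbf x=n^{-1/2}(\mathbf 1_A-\mathbf 1_B)$, $\sigma$ the identity ($C_\sigma=1$), $\tau=0$, $h_1=1$, $h_0=-d(1+p_s)$. Then $\mathbf L\mathbf x=2d\,\mathbf x$, $\|\mathbf L\|_2=2d$, $f(G)=d(1-p_s)\,\mathbf x$ and $\gamma_{\min}=d(1-p_s)/\sqrt n$. On a subsample, $f_i(\widehat G)=x_i\big(2k_i-d(1+p_s)\big)$, where $k_i\sim\mathrm{Bin}(d,p_s)$ is the retained degree. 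Node $i$ keeps its label only if $k_i\ge d(1+p_s)/2$, which has probability at most $e^{-d(1-p_s)^2/2}$. So for $d\gg\log n/(1-p_s)^2$ essentially every subsample outputs $-\mathbf y$, and the vote (node-wise or mode) is $-\mathbf y$. The PTR test passes: $c_1\approx m$ gives $\hat d\approx 1/(4p_s)-1$, far above $\log(1/\delta)/\epsilon$ for $p_s<\epsilon/(32\ln(1/\delta))$. Hence $R(\mathbf y,\bar{\mathbf y})=1$. Yet $f(p_s,\eta)=2+\sqrt{8p_s\log(2n/\eta)/(d(1-p_s))}+\tfrac{4\log(2n/\eta)}{3d(1-p_s)}\to 2$, so the claimed right-hand side tends to $1/2$ as $d,m\to\infty$. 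Theorem~\ref{thm:ps_subsampling_stability} holds here ($1\le f$), but a quarter of it does not. In the regime $p_s>p_s^*$ of Theorem~\ref{thm:asamp-gcn} the statement is trivially true, since each subsample is then exactly correct with probability above $3/4$; that regime is empty in this example. So the honest endpoint of your plan is a bound $c\,f(p_s,\eta')+O(1/\sqrt m)$, with $c=2$ for a node-wise vote (or $c=1$ for the mode under the condition above). That is what these ingredients actually give; reaching $1/4$ would need structural assumptions that neither you nor the paper state.
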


\begin{remark}
The proof of Theorem~\ref{lemma:overlap-rate} follows from combining the misclassification bound for a single graph with the majority voting scheme and Hoeffding’s inequality; see Lemma III.6 in~\cite{koskela2025price} for details.
\end{remark}

\begin{theorem}[Privacy-Utility Trade-off]\label{thm:asamp-gcn}
The following guarantees hold for $A_{\text{samp}}GCN$ if the subsampling probability $p_s \in ( p_s^* , \, \frac{\epsilon}{32 \ln( 1/\sigma ) ) } )$: \\
\textbullet \ \textbf{Privacy:} The $A_{\text{samp}}GCN$ method is $(\varepsilon,\delta)$-edge differentially private. \\
\textbullet \ \textbf{Utility:}
$
\Pr[\,A_{\text{samp}}GCN(G)=\mathbf{y}(G)\,]\ge 1-3\delta,
$ \\
where $p_s^*$ is the larger root of the following quadratic equation:
\begin{equation}
\begin{aligned}
        & (1-p_s) \|\mathbf{L}\|_2 + \sqrt{4\,\|\mathbf{L}\|_2\,\log(8n)\,p_s(1-p_s)} \\
        & +  \tfrac{4}{3}\log(8n) - \frac{\gamma_{\min}}{\sqrt{n}\,C_\sigma\,|h_1|} = 0
\end{aligned}
\end{equation}
\end{theorem}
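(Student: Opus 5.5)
The proof splits into the privacy claim and the utility claim. Both follow the standard subsampling-stability argument of Thakurta--Smith as used for spectral clustering in \cite{koskela2025price}.

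\textbf{Privacy.} The plan is to show that $\hat d$ is a low-sensitivity proxy for the distance to instability, and then apply the Propose-Test-Release analysis \cite{dwork2014algorithmic}.

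Fix neighboring graphs $G\sim G'$ differing in one edge $e$. Couple the subsampling of $G$ and $G'$ by using the same coins on all common edges. Then each subsampled graph $G_\ell$ differs from $G'_\ell$ only if the coin for $e$ retains it, which happens with probability $p_s$. The number $K$ of affected subsamples is therefore $\mathrm{Bin}(m,p_s)$.

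By a Chernoff bound, $K\le 2mp_s$ with probability at least $1-\exp(-mp_s/3)$. With $m=\log(n/\delta)/p_s^2$, this failure probability is below $\delta$. On this event, each affected subsample changes $c_1-c_2$ by at most $2$, so $c_1-c_2$ changes by at most $4mp_s$. Hence $\hat d$, which normalizes $c_1-c_2$ by $4mp_s$, has sensitivity $1$, and its Laplace perturbation with scale $1/\epsilon$ is $\epsilon$-DP.

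The release step is the usual PTR argument. The output $\bar{\mathbf y}$ can differ between $G$ and $G'$ only if $\hat d$ is below $0$ on one of them. In that case, $\tilde d$ exceeds $\log(1/\delta)/\epsilon$ with probability at most $\delta$. Adding the failure probabilities of the coupling and of the release gives $(\epsilon,O(\delta))$-DP.

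The upper constraint $p_s<\epsilon/(32\ln(1/\delta))$ enters here. It is what makes the sensitivity-$1$ scaling compatible with the Chernoff step when we require a multiplicative concentration at level $\epsilon$, following Thakurta--Smith's condition $p_s\le \epsilon/(32\ln(1/\delta))$. I read the $\sigma$ in the statement as $\delta$.

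\textbf{Utility.} Apply Theorem~\ref{thm:ps_subsampling_stability} with $\eta=1/4$, so that $\log(2n/\eta)=\log(8n)$. Then, with probability at least $3/4$, we have $R(\mathbf y,\hat{\mathbf y}(G_\ell))\le f(p_s,1/4)$.

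The defining equation of $p_s^*$ is exactly $f(p_s,1/4)=1/n$ after multiplying through by $\sqrt n\gamma_{\min}/(C_\sigma|h_1|)$. The left-hand side decreases in $p_s$ on the relevant branch. So for $p_s>p_s^*$ the bound gives $R<1/n$, that is, zero misclassifications: $\hat{\mathbf y}(G_\ell)=\mathbf y(G)$ exactly. Each subsample therefore outputs $\mathbf y(G)$ independently with probability at least $3/4$.

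By Hoeffding's inequality over the $m$ independent subsamples, $c_1\ge m/2+m/8$ and $c_2\le m/2-m/8$, except with probability $\exp(-\Omega(m))\le\delta$. It follows that the majority vote is $\mathbf y(G)$ and $c_1-c_2\ge m/4$. Consequently $\hat d\ge 1/(16p_s)-1$.

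It remains to check that $1/(16p_s)-1$ exceeds $2\log(1/\delta)/\epsilon$. This holds by the upper bound on $p_s$, since $1/(16p_s)>2\ln(1/\delta)/\epsilon$. Then the Laplace noise pushes $\tilde d$ below the threshold with probability at most $\delta$. A union bound over the three bad events yields success probability at least $1-3\delta$.

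\textbf{Main obstacle.} The delicate step is justifying that $p_s^*$, the larger root, gives the monotonicity needed: that $f(\cdot,1/4)<1/n$ holds on $(p_s^*,1]$ rather than on the other side. I would first verify this by checking the value at $p_s=1$, where $f=\tfrac{4}{3}\log(8n)\cdot C_\sigma|h_1|/(\sqrt n\gamma_{\min})$, assumed small. The second delicate point is keeping the interval nonempty while matching the constants $32$ and $3\delta$ in the Chernoff and Laplace tail steps.
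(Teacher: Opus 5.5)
Your route is the paper's route. The paper obtains the privacy claim, and the passage from $3/4$-subsampling stability to the $1-3\delta$ utility bound, as a black box from Theorem~7.3 of~\cite{dwork2014algorithmic}. It gets the $3/4$-stability exactly as you do, from Theorem~\ref{thm:ps_subsampling_stability} with $\eta=1/4$ and $f(p_s,1/4)<1/n$ forcing $\mathrm{Ham}=0$; you simply unroll the black box.

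Your worry about the larger root is harmless. The left-hand side of the defining equation is concave in $p_s$, and its value at $0$ exceeds its value at $1$ by $\|\mathbf L\|_2\ge 0$. So if it is negative anywhere, it is negative at $p_s=1$, and hence on all of $(p_s^*,1]$.

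The step that fails as written is the last one of the utility argument. From $p_s<\epsilon/(32\ln(1/\delta))$ you only get $1/(16p_s)>2\ln(1/\delta)/\epsilon$, i.e.\ $\hat d>2\ln(1/\delta)/\epsilon-1$, not $\hat d\ge 2\ln(1/\delta)/\epsilon$. Near the right endpoint, $\hat d$ clears the PTR threshold by only about $\ln(1/\delta)/\epsilon-1$. The Laplace lower tail is then $\tfrac12 e^{\epsilon}\delta$, so your union bound stays within $3\delta$ only for $\epsilon\lesssim\ln 4$.

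To repair this, exploit the size of $m$. Hoeffding with deviation $p_s$ instead of $1/8$ fails with probability $e^{-2mp_s^2}=(\delta/n)^2$ and gives $c_1-c_2\ge(1/2-2p_s)m$. Hence $\hat d\ge 1/(8p_s)-3/2>4\ln(1/\delta)/\epsilon-3/2\ge 2\ln(1/\delta)/\epsilon$ whenever $\epsilon\le\tfrac43\ln(1/\delta)$.

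You need to state some assumption of this kind, because the claim is false without one. If $\epsilon>32\ln(1/\delta)$, then $p_s=1$ is admissible, all $m$ votes coincide, $\hat d=-3/4$, and the output is $\perp$ with probability at least $1-\delta/2$.

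Relatedly, your privacy paragraph misplaces the role of the upper bound on $p_s$. Your coupling argument only needs $p_s\le 1/3$, so that $e^{-mp_s/3}\le\delta/n$. Also, the majority votes on $G$ and $G'$ can differ only if $\hat d\le 0$ on \emph{both} graphs (not just one), so the release failure is $\delta/2$. Together these give $(\epsilon,\delta)$ exactly for $n\ge 2$, rather than $(\epsilon,O(\delta))$. The $\epsilon$-dependent constraint is consumed entirely by the utility step above.
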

\begin{remark}
The upper bound on the sampling rate \( p_s \) is imposed to ensure compatibility with the DP mechanism and is determined by the prescribed privacy parameters, with Theorem 7.3 of~\cite{dwork2014algorithmic} showing that this bound indeed guarantees DP. The lower bound, derived from Theorem~\ref{thm:ps_subsampling_stability} (by setting \( \eta = 1/4 \) and requiring the misclassification rate to be less than \( 1/n \)), guarantees that, for any single graph, \( \Pr\{\mathrm{Ham}(\mathbf{y}, \hat{\mathbf{y}}) = 0\} > 3/4 \). As shown in Theorem 7.3 of~\cite{dwork2014algorithmic}, this probabilistic guarantee further implies a formal utility bound.

Intuitively, if \( p_s \) is too large, the stability condition becomes difficult to satisfy, yielding vacuous privacy–utility guarantees; conversely, if \( p_s \) is too small, it suppresses the extraction of informative signals, leading to significant degradation in accuracy. Thus, \( p_s \) inherently governs the fundamental privacy–utility trade-off.

\end{remark}

\section{proof of Theorem~3}
We consider a one-layer GCN $f(G) :=\sigma\!\big((h_0\I+h_1\mathbf L)\,\mathbf x\big)$, where $h_0,h_1\in\R$ are the filter parameters, $\mathbf x\in\R^n$ represents node feature vector, and $\sigma$ is the nonlinear activation function. Thus the output of the subsampled graph is $f(\widetilde G):=\sigma\!\big((h_0\I+h_1\widetilde{\mathbf L})\,\mathbf x\big)$. 

The feature-level change is measured by 
$
    d_f(G,\widetilde G):=\norm{f(G)-f(\widetilde G)}_2.
$
For convenience, we write $H(\mathbf L):=(h_0\I+h_1\mathbf L)\mathbf x$, $f(G)=\phi(\mathbf L):=\sigma\!\big(H(\mathbf L)\big)$ and $f(\widetilde G)=\phi(\widetilde {\mathbf L}):=\sigma\!\big(H(\widetilde {\mathbf L})\big)$.

Let \(\mathbf{y}(G),\mathbf{y}(\widetilde{G})\in\{\pm1\}^n\) be estimated vectors obtained by thresholding at level \(\tau\).
We measure the output distance between original graph and subsampled graph with the Hamming distance of labels
\[
\text{Ham}\big(\mathbf{y}(G),\mathbf{y}(\widetilde{G})\big)
= \Big|\big\{\, i\in[n] : y_i(G)\neq y_i(\widetilde{G}) \,\big\}\Big|.
\]

\begin{lemma}[Flip-count via margins]\label{lem:margin}
If $\sigma$ is elementwise and $C_\sigma$-Lipschitz in $\ell_2$ norm, then for any $\mathbf x$ with $\norm{\mathbf x}_2=1$,
\begin{equation}
\mathrm {Ham} \!\big(\mathbf y(G),\mathbf y(\widetilde G)\big)
\le
\frac{\sqrt n\,C_\sigma\,\abs{h_1}}{\gamma_{\min}}\,
\norm{\widetilde{\mathbf L}-\mathbf L}_2.
\end{equation}
\end{lemma}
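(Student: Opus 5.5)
The proof has three steps: count flipped nodes against the minimum margin, pass to the $\ell_2$ distance between the two GCN outputs, and push that distance through the activation and the linear filter.

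\textbf{Step 1 (margin argument).} Suppose node $i$ flips, i.e.\ $y_i(G)\neq y_i(\widetilde G)$. Then $f_i(G)-\tau$ and $f_i(\widetilde G)-\tau$ lie on opposite sides of zero, or one of them is zero. In either case
\[
\abs{f_i(G)-f_i(\widetilde G)}\ge \abs{f_i(G)-\tau}\ge\gamma_{\min},
\]
by the definition of $\gamma_{\min}$. Let $S$ be the set of flipped nodes. Summing the squares over $S$ gives
\[
\abs{S}\,\gamma_{\min}^2\le \sum_{i\in S}\big(f_i(G)-f_i(\widetilde G)\big)^2\le d_f(G,\widetilde G)^2,
\]
so $\mathrm{Ham}\le d_f^2/\gamma_{\min}^2$.

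\textbf{Step 2 (converting to a linear bound in $d_f$).} The target bound is linear in $d_f$ with a factor $\sqrt n$, so I will use a cruder chain instead. Since $\mathrm{Ham}\le n$,
\[
\mathrm{Ham}=\sqrt{\mathrm{Ham}}\cdot\sqrt{\mathrm{Ham}}\le \sqrt n\,\sqrt{\mathrm{Ham}}\le \sqrt n\, d_f/\gamma_{\min}.
\]
Equivalently, Cauchy--Schwarz gives $\abs{S}\gamma_{\min}\le\sum_{i\in S}\abs{f_i(G)-f_i(\widetilde G)}\le \sqrt{\abs{S}}\,d_f\le\sqrt n\, d_f$.

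\textbf{Step 3 (bounding $d_f$).} By the $\ell_2$-Lipschitz property of $\sigma$,
\[
d_f\le C_\sigma\norm{H(\widetilde{\mathbf L})-H(\mathbf L)}_2 .
\]
The $h_0\I$ terms cancel, leaving
\[
H(\widetilde{\mathbf L})-H(\mathbf L)=h_1(\widetilde{\mathbf L}-\mathbf L)\mathbf x,
\]
so $d_f\le C_\sigma\abs{h_1}\norm{\widetilde{\mathbf L}-\mathbf L}_2\norm{\mathbf x}_2$. Using $\norm{\mathbf x}_2=1$ and combining with Step 2 yields
\[
\mathrm{Ham}\le \frac{\sqrt n\,C_\sigma\abs{h_1}}{\gamma_{\min}}\norm{\widetilde{\mathbf L}-\mathbf L}_2,
\]
which is the claimed inequality.

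\textbf{Main obstacle.} The only delicate point is the tie case $f_i(\widetilde G)=\tau$, which depends on the thresholding convention. It is harmless, because the bound uses only the margin of $G$, and that margin is strictly positive by assumption. Everything else is routine.
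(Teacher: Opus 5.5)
Your proof is correct and follows the paper's argument. The steps are the same: the margin inequality $\abs{f_i(G)-f_i(\widetilde G)}\ge\abs{f_i(G)-\tau}\ge\gamma_{\min}$ at each flipped node, the $\ell_1$-to-$\ell_2$ conversion with a factor $\sqrt n$ (your Cauchy--Schwarz variant is exactly the paper's step), and then the Lipschitz property of $\sigma$ together with $H(\widetilde{\mathbf L})-H(\mathbf L)=h_1(\widetilde{\mathbf L}-\mathbf L)\mathbf x$. Your detour through $\mathrm{Ham}\le d_f^2/\gamma_{\min}^2$ is harmless, and combined with $\mathrm{Ham}\le n$ it is never weaker than the stated $\sqrt n\,d_f/\gamma_{\min}$, since the minimum of two numbers is at most their geometric mean.
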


\begin{proof}
If a flip occurs at node $i$, then 
\begin{align}
\abs{\phi_i(\mathbf L)-\tau}\le \abs{\phi_i(\mathbf L)-\phi_i(\widetilde{\mathbf L})}.
\end{align}
Summing over flipped points and using the standard $\ell_1$–$\ell_2$ inequality yields
\begin{align}
\mathrm {Ham} \!\big(\mathbf y(G),\mathbf y(\widetilde G)\big)&\le \frac{1}{\gamma_{\min}}\norm{\phi(\mathbf L)-\phi(\widetilde{\mathbf L})}_1\\
&\le \frac{\sqrt n}{\gamma_{\min}} \norm{\phi(\mathbf L)-\phi(\widetilde{\mathbf L})}_2.\label{12}
\end{align}
By Lipschitzness of $\sigma$, linearity of $H(\cdot)$ and $\norm{\mathbf x}_2=1$, we have
\begin{align}
    \norm{\phi(\mathbf L)-\phi(\widetilde{\mathbf L})}_2
&\le C_\sigma \norm{H(\mathbf L)-H(\widetilde{\mathbf L})}_2\\
&\le C_\sigma \abs{h_1}\,\norm{\widetilde{\mathbf L}-\mathbf L}_2.\label{14}
\end{align}
Substituting Eq.\eqref{14} into Eq.\eqref{12} completes the proof of Lemma \ref{lem:margin} 
\end{proof}

Then we turn to analyze $\big\|\,\Delta\mathbf{L} \big\|_2 := \| \widehat{\mathbf{L}} - \mathbf{L} \|$. Firstly, we decompose the original Laplacian of graphs $\mathbf{L}=\mathbf{D}-\mathbf{A}$. The adjacency matrix $\mathbf{A}$ and degree matrix $\mathbf{D}$ can be decomposed as follows:
\begin{equation}
    \mathbf{A} = \sum_{i < j} \mathbf{A}_{ij} (\mathbf{e}_i \mathbf{e}_j^{\top} + \mathbf{e}_j \mathbf{e}_i^{\top}),
\end{equation}
\begin{equation}
    \mathbf{D} = \sum_{i < j} \mathbf{A}_{ij} (\mathbf{e}_i \mathbf{e}_i^{\top} + \mathbf{e}_j \mathbf{e}_j^{\top}),
\end{equation}
then we obtain 
$
   \mathbf{L} = \sum_{i < j} \mathbf{A}_{ij} (\mathbf{e}_i - \mathbf{e}_j)(\mathbf{e}_i - \mathbf{e}_j)^{\top},
$
where $\mathbf{e}_i$ and $\mathbf{e}_j$ are the standard basis vectors.

For each edge $(i,j)$, we define a Bernoulli random variable $\mathbf{B}_{ij} \sim \Bern(p_s)$ that determines edge retention. This process yields a perturbed adjacency matrix $\widehat{\mathbf{A}}$, where each entry $\widehat{\mathbf{A}}_{ij} = \mathbf{A}_{ij} \cdot \mathbf{B}_{ij}$. The perturbed degree matrix $\widehat{\mathbf{D}}$ is consequently defined as a diagonal matrix with $\widehat{\mathbf{D}}_{ii} = \sum_j \widehat{\mathbf{A}}_{ij}$. The perturbed adjacency matrix and perturbed degree matrix admits the same decomposition. The subsampled graph Laplacian is given as
$
\widehat{\mathbf{L}} = \sum_{i<j} \mathbf{A}_{ij}\mathbf{B}_{ij}(\mathbf{e}_i - \mathbf{e}_j)(\mathbf{e}_i - \mathbf{e}_j)^\top.
$
Then we have 
$
\Delta\mathbf L = \sum_{i<j}  \mathbf{A}_{ij}(\mathbf{B}_{ij}-1)\,(\e_i-\e_j)(\e_i-\e_j)^\top .
$

Let $\mathbf X_{ij}:=\mathbf{A}_{ij}(\mathbf{B}_{ij}-1)\,{\mathbf u}_{ij}{\mathbf u}_{ij}^\top$ with ${\mathbf u}_{ij}:=\e_i-\e_j$, and define centered terms as
\begin{equation}
\mathbf Y_{ij}:=\mathbf X_{ij}-\E[\mathbf X_{ij}]
= \mathbf{A}_{ij}(\mathbf{B}_{ij}-p_s)\,\mathbf u_{ij}\mathbf u_{ij}^\top.
\end{equation}
Then
\begin{equation}
\Delta\mathbf L
= \sum_{i<j}\E[\mathbf X_{ij}] + \sum_{i<j}\mathbf Y_{ij}
= (p_s-1)\mathbf L + \sum_{i<j}\mathbf Y_{ij}.
\end{equation}

For $\mathbf A_{ij}\in\{0,1\}$, each $\mathbf Y_{ij}$ is self-adjoint, mean-zero and satisfies:
\begin{equation}
     \norm{\mathbf Y_{ij}}_2 \le 2, \quad\sum_{i<j}\E[\mathbf Y_{ij}^2]=2p_s(1-p_s)\,\mathbf L,
\end{equation}
hence $\sigma^2:=\big\|\sum_{i<j}\E[\mathbf Y_{ij}^2]\big\|_2=2p_s(1-p_s)\,\norm{\mathbf L}_2$ and $R:=\max_{i<j}\norm{\mathbf Y_{ij}}_2\le 2$. Applying the matrix Bernstein inequality for sums of independent, self-adjoint, mean-zero matrices~\cite{mhammedi2019pac}:
\begin{equation}
\Prob\!\left\{ \left\|\sum_{i<j}\mathbf Y_{ij}\right\|_2 \ge t \right\}
\le 2n \exp\!\left(-\frac{t^2}{2\sigma^2+\tfrac{2}{3}Rt}\right).
\end{equation}
We then solve for $t$, with probability at least $1-\eta$ for any $\eta\in(0,1)$ to obtain
\begin{equation}
\left\|\sum\mathbf Y_{ij}\right\|_2
\le \sqrt{2\sigma^2\log\frac{2n}{\eta}}+\frac{2}{3}R\log\frac{2n}{\eta}.
\end{equation}
In addition, the bias term satisfies
\begin{equation}
\Big\|\sum_{i<j}\E[\mathbf{X}_{ij}]\Big\|_2
=\;|1-p_s|\,\|\mathbf{L}\|_2.
\label{eq:bias}
\end{equation}
Therefore, by the triangle inequality, with probability at least $1-\eta$,
\begin{align}
\|\Delta\mathbf{L}\|_2
\le &|1-p_s|\,\|\mathbf{L}\|_2\notag\\ 
&+\sqrt{\,4\,p_s(1-p_s)\,\|\mathbf{L}\|_2\,\log(2n/\eta)}\notag\\
&+\frac{4}{3}\,\log(2n/\eta).
\label{eq:deltaL-final}
\end{align}
\label{cor:stability-random}

Finally, under the margin condition $\gamma_{\min}>0$, with probability at least $1-\eta$ for any $\eta\in(0,1)$, the one-layer GCN and the induced classifier satisfy
\begin{align}
\mathrm {Ham}\!\big(\mathbf{y}(G),\mathbf{y}(\widehat{G})\big)
&\le \frac{\sqrt{n}\,C_\sigma\,|h_1|}{\gamma_{\min}}\,
\Big\{|1-p_s|\,\|\mathbf{L}\|_2\notag \\
&+\sqrt{\,4\,p_s(1-p_s)\,\|\mathbf{L}\|_2\,\log(2n/\eta)}\notag\\
&+\tfrac{4}{3}\log(2n/\eta)\Big\} .
\label{eq:hamming-random}
\end{align}
Therefore, Theorem \ref{thm:ps_subsampling_stability} is proved with: 
\begin{align}
  R(\mathbf{y}, \hat{\mathbf{y}})&\le \frac{\mathrm {Ham}\!\big(\mathbf{y}(G),\mathbf{y}(\widehat{G})\big)}{n}\\
  &\le \frac{C_\sigma\,|h_1|}{\sqrt{n} \,\,\,\gamma_{\min}}\,
\Big\{|1-p_s|\,\|\mathbf{L}\|_2\notag \\
&\quad+\sqrt{\,4\,p_s(1-p_s)\,\|\mathbf{L}\|_2\,\log(2n/\eta)}\notag\\
&\quad+\tfrac{4}{3}\log(2n/\eta)\Big\} \triangleq f(p_s, \eta).  
\end{align}
    
\section{Conclusion}

In this work, we study differential privacy in GCNs through the framework of subsampling stability, and formalize a stability-based PTR mechanism (AsampGCN) built on edge-level subsampling and majority-vote aggregation for private node labeling. We derive misclassification-rate bounds that depend explicitly on the subsampling probability $p_s$, and characterize the privacy–utility trade-off via feasible ranges of $p_s$: if $p_s$ is too large, the sufficient condition required by the stability-based DP mechanism may fail; whereas overly small $p_s$ leads to accuracy degradation. Overall, our results provide a rigorous theoretical foundation for understanding subsampling stability in GCNs under DP and for guiding parameter selection in private graph learning.

\bibliographystyle{IEEEbib}
\bibliography{strings,refs}

\end{document}